
\documentclass[runningheads]{llncs}
\usepackage{graphicx}
\usepackage{comment}
\usepackage{amsmath} 
\usepackage{color}

\usepackage[misc]{ifsym}

\usepackage{algorithm}
\usepackage{algorithmic}
\usepackage{amsmath}
\usepackage{makecell,multirow,diagbox}
\usepackage{subfigure}
\usepackage{graphicx}
\usepackage{array}
\usepackage{booktabs} 

\begin{document}
\pagestyle{headings}
\mainmatter
\def\ECCVSubNumber{1811}  

\title{iffDetector: Inference-aware Feature Filtering for Object Detection} 


\titlerunning{iffDetector}
%

\author{Mingyuan Mao\textsuperscript{1,${\dag}$} \and
Yuxin Tian\textsuperscript{1,${\dag}$} \and Baochang Zhang\textsuperscript{1,*} \and Qixiang Ye\textsuperscript{2} \and Wanquan Liu\textsuperscript{3} \and Guodong Guo\textsuperscript{4,5} \and David Doermann\textsuperscript{6}}

\authorrunning{Mao, M. and Tian, Y. et al.}
%
\institute{\textsuperscript{1}Beihang University, Beijing, China\\
\textsuperscript{2}University of Chinese Academy of Sciences, Beijing, China\\
\textsuperscript{3}Curtin University, Perth, Australia\\
\textsuperscript{4}Institute of Deep Learning, Baidu Research, Beijing, China\\
\textsuperscript{5}National Engineering Laboratory for Deep Learning Technology and Application\\
\textsuperscript{6}University at Buffalo, Buffalo, USA\\
\textsuperscript{*}Corresponding author, email: bczhang@buaa.edu.cn\\
\textsuperscript{${\dag}$} Co-first author\\}
\maketitle
\begin{abstract}
Modern CNN-based object detectors focus on feature configuration during training but often ignore feature optimization during inference. In this paper, we propose a new feature optimization approach to enhance features and suppress background noise in both the training and inference stages. We introduce a generic Inference-aware Feature Filtering (IFF) module that can easily be combined with modern detectors, resulting in our iffDetector. Unlike conventional open-loop feature calculation approaches without feedback, the IFF module performs closed-loop optimization by leveraging high-level semantics to enhance the convolutional features.  By applying Fourier transform analysis, we demonstrate that the IFF module acts as a negative feedback that theoretically guarantees the stability of feature learning. IFF can be fused with CNN-based object detectors in a plug-and-play manner with  negligible computational cost overhead. Experiments on the PASCAL VOC and MS COCO datasets demonstrate that our iffDetector consistently outperforms state-of-the-art methods by significant margins\footnote{The test code and model are anonymously available in https://github.com/anonymous2020new/iffDetector }.

\keywords{Object Detection, iffDetector, IFF, Negative Feedback}
\end{abstract}

\section{Introduction and Related Work}

We have recently witnessed the success of visual object detection thanks to the unprecedented representation capacity of convolutional neural networks (CNNs) \cite{RCNN14,FastRCNN15,FasterRCNN15,YOLO16,Redmon_2019,FPN17,SSD16}. In~\cite{Survey2019}, various taxonomies have been used to categorize the large number of CNN-based object detection methods, including  one-stage~\cite{FasterRCNN15} vs. two-stage~\cite{FocalLoss17}, single-scale features ~\cite{FasterRCNN15} vs. feature pyramid networks~\cite{FPN17}, and handcrafted networks~\cite{SSD16} vs. network architecture search~\cite{NAS-FPN2019}. 

To  explore  the representation capability of CNNs,  these detectors widely adopt a one-path network architecture which focuses on feature configuration during training \cite{Fu2016dssd,IoU-Net18,he2018bounding,zhang2019freeanchor,CornerNet2018,CenterNet2019,tian2019fcos,kong2019foveabox} but unfortunately ignore feature optimization during inference. They take the feature maps extracted by the backbone as the input, apply various convolution operations, and acquire the final predictions at the end of the network \cite{ResNet16,Survey2019,FasterRCNN15,Liu_2018_ECCV,FPN17}. For instance, the
Feature Pyramid Network (FPN)~\cite{FPN17} and top-down modulation~\cite{TDM16} are designed to fuse multi-scale convolutional features, which improve the standard feature extraction pyramid by adding a second pyramid that takes the high level features from the first pyramid and passes them down to lower layers. 
This allows features at each level to have access to both higher and lower level features. The feature selective anchor-free  (FSAF)~\cite{zhu2019feature} approach dynamically selects the most suitable level of feature for each instance based on the instance appearance, so as to address the scale variation problem.
To strengthen the deep features learned from lightweight CNN models, RFB Net~\cite{Liu_2018_ECCV} has a novel module called the Receptive Field Block (RFB) which makes use of a multi-branch pooling with varying kernels corresponding to the receptive fields of different sizes, and reshapes them to generate enforced representations. 

To facilitate object localization, many CNN-based detectors leverage anchor boxes at multiple scales and aspect ratios as reference points to address optimal feature-object matching~\cite{MetaAnchor2018,RefineDet2018,GuidedAnchoring,UnitBox2016}. The MetaAnchor~\cite{MetaAnchor2018} approach learns to optimize anchors from arbitrary customized prior boxes with a sub-network. GuidedAnchoring~\cite{GuidedAnchoring} leverages semantic features to guide the prediction of anchors while replacing dense anchors with predicted anchors.  The state-of-the-art FreeAnchor~\cite{zhang2019freeanchor} optimizes the anchor matching process by formulating a maximum likelihood estimation (MLE) procedure, which infers the most representative anchor from a ``bag" of anchors for each object. 
Considering that anchor boxes introduce many hyper-parameters and design choices that make the network hard to train, anchor-free frameworks were recently proposed for object detection~\cite{CornerNet2018,CenterNet2019,tian2019fcos,kong2019foveabox}.
CornerNet~\cite{CornerNet2018} detects an object as a pair of key points on the top-left and bottom-right corners of a bounding box without anchors, which greatly simplifies the output of the network and eliminates the requirement for designing anchor boxes. Like CornerNet, CenterNet~\cite{CenterNet2019} and FoveaBox~\cite{kong2019foveabox} were introduced with  more efficiency for object detection.

Unfortunately, CNN-based detectors are still challenged by  complex scenes where the targets and background become ambiguous.  This may be caused by the feature representation that is not optimized according to the inference. To the best of our knowledge, there are no systematic approaches for feature optimization during inference, which inhibits the consistent enhancement of detectors. 
An inference module in \cite{cascade18} was introduced to efficiently distinguish the target objects from the background by adding a cascade structure at inference. FreeAnchor \cite{zhang2019freeanchor}  leverages spatial inference to enforce feature representation and feature-object matching. These inference modules however were not investigated to optimize the feature representation in a theoretical framework in these detectors. 

\begin{figure}
\centering
\includegraphics[height=5.0cm]{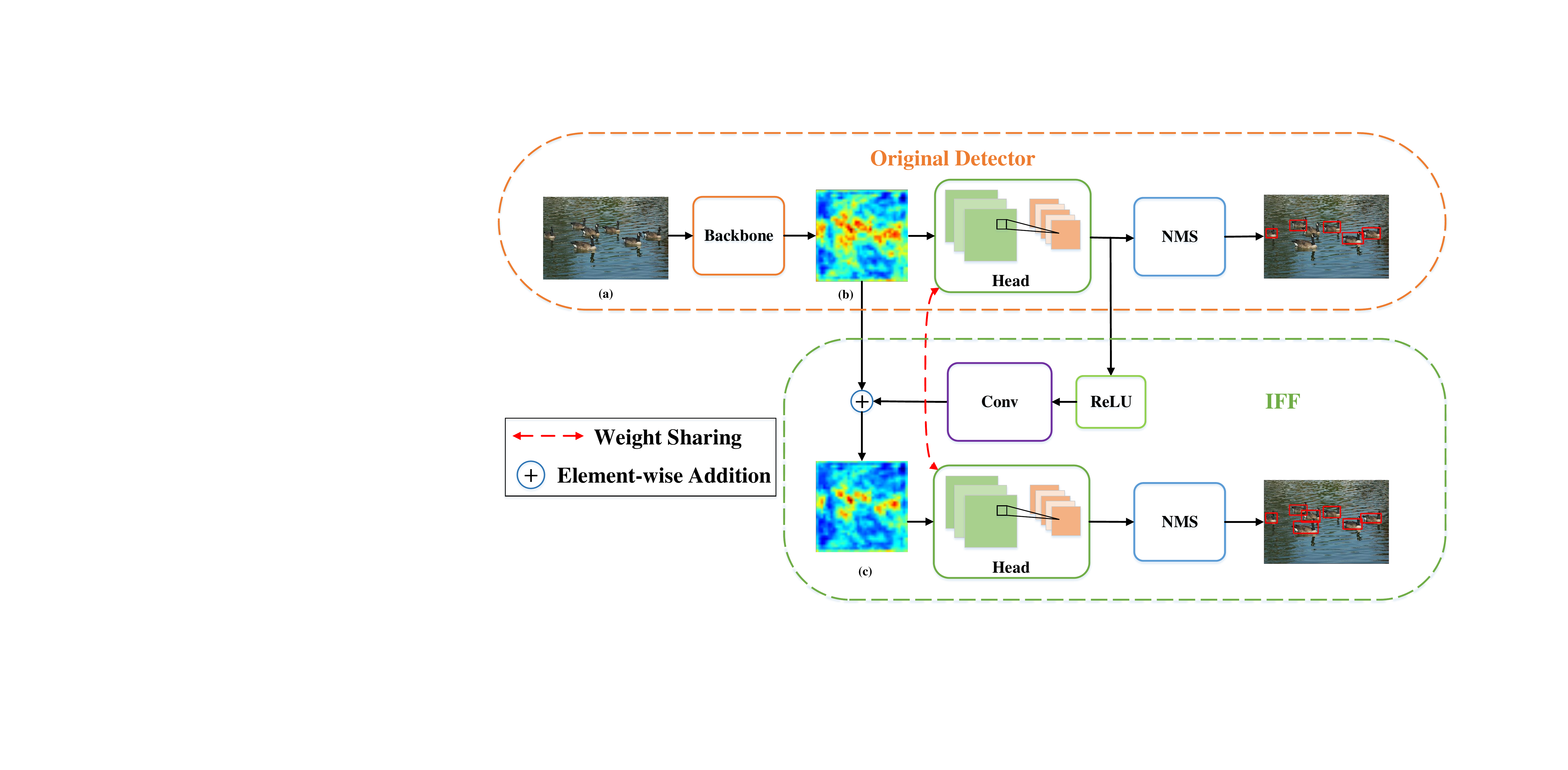}
\caption{Feature map denoising: (a) input image; (b) original feature map extracted by the backbone; (c) feature map optimized by IFF. (b) and (c)  have more than one channel, and we visualize them by upsampling the feature maps to $300 \times 300$, adding all channels together and converting them to heatmaps, where the red regions represent stronger activation.}
\label{extra1}
\end{figure}

In this paper, we propose an Inference-aware Feature Filtering (IFF) approach, which optimizes feature learning in a theoretical framework by introducing a feedback architecture in detection networks. 
IFF differs from the existing inference frameworks in that it incorporates the conventional inference module into our filtering module to refine the feature map, instead doing it just for the final detection.
Prior research shows that activated regions on a feature map are more likely to contain targets. In reality, however, the feature map extracted by the backbone may not be accurate. On the original feature map, some parts of the background are falsely activated (see Fig.~\ref{extra1}(b)), which may lead to detection errors. By applying IFF to the original feature map, we can enhance the correctly activated regions, making them more intense, while at the same time depressing the background (see Fig.~\ref{extra1}(c)). This results in more accurate object predictions. The IFF procedure, as shown in Fig.~\ref{extra1} and detailed in Sections $2$ and $3$, can be performed repeatedly. 

Approximating a negative feedback system in the detection network, we prove the stability of the IFF-based object detection system. This leads to a new and generic inference model based on light-weight convolutional layers to improve the detection performance for both anchor-based and anchor-free detectors.	By evaluating various CNN-based object detectors  including SSD~\cite{SSD16}, YOLO~\cite{YOLO9000}, FreeAnchor~\cite{zhang2019freeanchor}, FoveaBox~\cite{kong2019foveabox}, and CenterNet~\cite{CenterNet2019}, on PASCAL VOC2007+2012 and COCO 2017, we demonstrate the general applicability and effectiveness of IFF. The contributions of this paper are summarized as follows:
\begin{enumerate}
    \item  We present a generic Inference-aware Feature Filtering (IFF) module, which can be easily mounted on both anchor-based and anchor-free detectors, to optimize CNN features during inference.
    \item  We provide an explainable investigation into the proposed IFF, which theoretically guarantees the feature learning stability of the CNN-based detectors.
    \item We achieve a significant performance gain over various existing detectors (including the state-of-the-art) on average precision (AP) on the COCO 2017 data set, with a  negligible increase in computational cost.
\end{enumerate}  
 
\begin{figure}
\centering
\includegraphics[height=4.3cm]{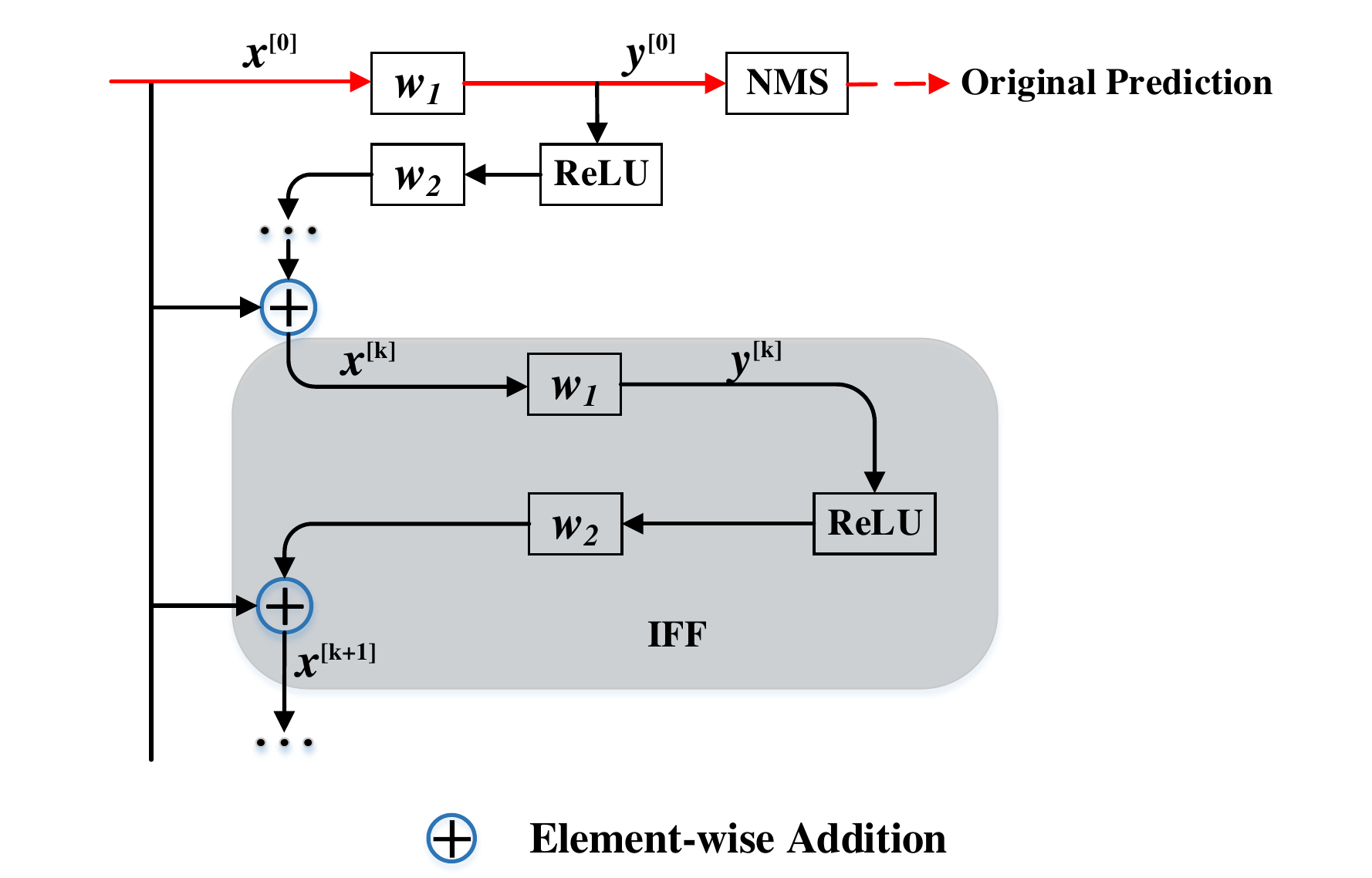}
\caption{Inference-aware feature filtering module: $w_1$ denotes the convolutional filters in the head of the original detector; $w_2$ denotes the convolutional filters we add to the system; $x^{[k]}$ and $y^{[k]}$ represent the feature map and the output in the $k^{th}$ forward propagation respectively; the addition operation introduces  a feedback  from the detection output into the feature learning to achieve a negative feedback (control theory \cite{1104893}).}
\label{architecture}
\end{figure}

\section{Inference-aware Feature Filtering}
	
In this section, we first detail the mechanism of the proposed filtering module and its advantages, then provide the framework for integrating IFF into a detector and explain how it works. 
 
\subsection{The Filtering Module}

Our IFF is designed to highlight the features that can actively respond to  the detection output (location and classification) and it provides a new and generic inference method for object detection. Our filtering module introduces a feedback link from the detection output to filter undesirable features. Suppose that for each input feature map $x^{[0]}$, there exists an ideal feature map $n$ that can lead to the ideal detection results, and additive noise $\delta$, such that
\begin{equation}
	\label{bczhangdelta}
	\begin{aligned}
	x^{[0]} = n +\delta.
	\end{aligned}
\end{equation}

To mitigate the noise effect on the features (Fig.~\ref{architecture}), we incorporate the output of the detection into our inference-aware feature filtering (IFF) module and have
\begin{equation}
	\begin{aligned}
	x^{[k+1]} = x^{[0]} + w_2 \otimes ReLU(y^{[k]}),
	\end{aligned}
	\label{conv-1}
\end{equation}
and
\begin{equation}
	\begin{aligned}
	y^{[k]} = w_1 \otimes x^{[k]},
	\end{aligned}
	\label{conv-2}
\end{equation}

\noindent
where $x^{[k]}$ and $y^{[k]}$ represent the feature map and the output in the $k^{th}$ forward propagation; $w_1$ denotes the forward convolution filters, $w_2$ are the new convolutional filters added, and $\otimes$ denotes the convolution operation. All the ReLU functions in this paper are Leaky ReLU. 

Existing detectors without feedback only implement the forward propagation once at inference, which is the red path in Fig.~\ref{architecture}. The corresponding output\footnote{$Prediction = NMS(y^{[0]})$, where NMS is short for Non-Maximum Suppression.} is
\begin{equation}
	\begin{aligned}
	y^{[0]} = w_1 \otimes x^{[0]}.
	\end{aligned}
\end{equation}

\noindent
Note that we not only regard $y^{[0]}$ as heuristic information in IFF, but also use it for the final prediction. Thus, we introduce negative feedback architectures and obtain $\{(x^{[0]}, y^{[0]}), (x^{[1]}, y^{[1]}), \cdots (x^{[k]}, y^{[k]}), \cdots (x^{[M_I]}, y^{[M_I]})\}$ based on Fig.~\ref{architecture}. The maximum value $M_I$ is flexible and  can be adjusted as necessary. $y^{[M_I]}$ is used to make the final prediction.

We explore the relationship between $x^{[k]}$ and $x^{[k+1]}$ for feature filtering, and show that the feature capacity is visually enhanced by our generic module in Fig.~\ref{architecture}.  In addition, the efficacy is theoretically validated based on the feedback mechanism in  control theory, where the addition operation introduces feedback from the detection output into the feature learning to achieve a negative feedback (control theory \cite{1104893}). The detailed  proof is provided in Section $3$, verifying that such a feedback system is stable in terms of filtering the noise $\delta$. Briefly, the noise effect on $x^{[k+1]}$ is reduced in comparison to the effect on $x^{[k]}$. The framework of integrating IFF into a detector is provided in Section $2.2$. This feedback structure has two main advantages, denoising and exploring high level semantics:
\begin{enumerate}
    \item \textbf{Denoising.} Input images often have large variations of  lighting and object occlusion, which leads to a ``polluted" feature map generated by the CNN backbone. As a result, a prediction based on such a feature map can be somewhat ambiguous. Introducing a feedback structure to the CNN updates the open-loop feature calculation to a closed-loop feature optimization. Based on control theory, such an architecture makes the feature map more robust to noise. This negative feedback reuses the heuristic information in the original prediction, denoises the feature map, and makes the correctly activated regions on it more intensive. More details can be found in Section 3.2.
    \item \textbf{Exploring high-level semantics.} It is rational to infer that classification and bounding boxes have a coupling relationship. For example, the bounding boxes for cars and people are generally different, in particular with respect to their aspect ratios. To this end, the feedback structure helps merge the location and classification, promoting a detector to learn the coupling relationships.
\end{enumerate}

\subsection{Our iffDetector}

The IFF module is based on a set of light-weight convolution operations that can lead to enhanced features. To clarify how IFF works, we update YOLOv2 \cite{YOLO9000} to iffDetector (see Fig.~\ref{filtering-module}). IFF introduces a unique feedback strategy  for the inference, forming a cascade architecture. Existing detectors without feedback use the one-path architecture at inference, which is the red path in  Fig.~\ref{filtering-module}. For IFF an extra convolution layer $w_2$ is added to merge the prediction and the 
\begin{figure}
\centering
\includegraphics[height=5.6cm]{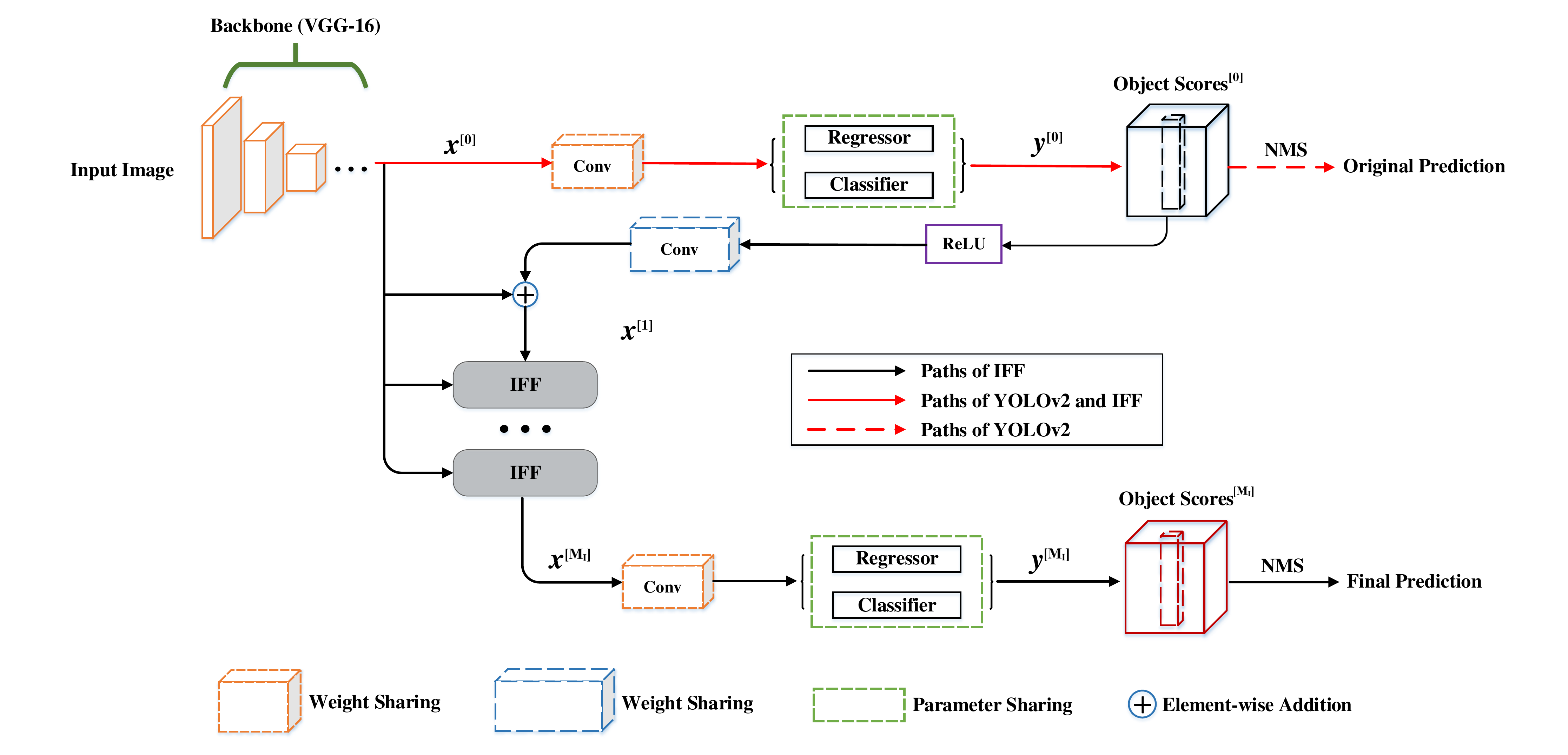}
\caption{The architecture of iffDetector: $\ast ^{[k]}$ represents the feature in the $k^{th}$ filtering module, and each IFF module is shown in Fig.~\ref{architecture}; $M_I$ is the number of the filtering modules we use; $x^{[0]}$ is the original feature map. Note that the head of YOLOv2 \cite{YOLO9000} only has one convolution layer.}
\label{filtering-module}
\end{figure}
feature map. The original object scores are not regarded as the final predictions. Instead we apply the convolution to the object scores, and then add the convolution results to the original feature map $x^{[0]}$, generating a new feature map with better features. As mentioned in Section $2.1$, this process can be implemented several times, which means the depth of the cascade architecture is flexible. The final feature map we acquire generates the final prediction. Unlike YOLOv2 \cite{YOLO9000}, most existing detectors predict bounding boxes at different scales, which can also be updated to iffDetectors by simply adding the filtering modules independently across every scale similar to Fig.~\ref{filtering-module}.  $M_I$ is the number of the filtering modules used. We then apply NMS on $y^{[M_I]}$ to obtain the final prediction. Note that a larger $M_I$ does not always guarantee a better result because of training inefficiency. Our experiments show that setting $M_I$ to 1 or 2 is appropriate.
	
\textbf{Generalizability of Inference-aware Feature Filtering. }
The filtering module can be applied to various CNN-based detectors including anchor-based and anchor-free frameworks. Without loss of generality our filtering module is simple and effective. The size of the convolution filters we design is flexible. We use $1 \times 1$ filters for the backbones darknet19 and VGG-16 and $3 \times 3$ filters for ResNet-50-FPN and ResNeXt-64x4d-101-FPN, with less than a 2\% increase in the number of parameters.

Based on the above description of IFF, we develop a universal detector training procedure as described in Algorithm \ref{alg:training}.
	\begin{algorithm}[tb]
		\caption{Training iffDetector based on IFF}
		\label{alg:training}
		\begin{algorithmic}
			\STATE {\bfseries Input:}\\
			$X$: Input images.\\
			$G$: Labels of the training data.\\
		    {$M_I$}: The number of the filtering modules.\\
			\STATE {\bfseries Output:} $\theta$ (e.g., $w_1$, $w_2$); Detection parameters.
			\STATE $\theta \gets$ initialize network parameters.
			\STATE {Obtain original feature map $x^{[0]}$.}
			\REPEAT
			\FOR{$k=0$ : $M_I-1$}
			\STATE \textbf{Forward propagation:} Obtain $y^{[k]}$: $y^{[k]} \gets w_1 \otimes x^{[k]}$.
			\STATE \textbf{Denoise the feature map:} $x^{[k+1]} \gets x^{[0]} + w_2 \otimes ReLU(y^{[k]})]$.
			\ENDFOR
			\STATE \textbf{Calculate the loss:} $L(\theta; G)$.
			\STATE \textbf{Backward propagation:} $\theta \gets \theta - \bigtriangledown L(\theta; G)$
			\UNTIL{The convergence of the loss}
			\STATE {\bfseries Return} $\theta$
		\end{algorithmic}
	\end{algorithm}

\section{Theoretical Investigation}
	
To analyze our iffDetector, we first apply the Fourier transform on the inference module. We then explain how our module affects feature learning from a theoretical point of view.
	
\subsection{Properties of the Detector Based on the Fourier Transform}
To discuss the properties of the detector we  apply the Fourier transform on Eqs. \ref{conv-1} and \ref{conv-2}, and have
\begin{equation}
	\begin{aligned}
	Y^{[k]} = W_1 \ast X^{[k]},
	\end{aligned}
	\label{frequency-1}
\end{equation}
\noindent
which represents the linear case for the Fourier transform, and
\begin{equation}
	\begin{aligned}
	X^{[k+1]} &= X^{[0]} + W_2 \ast \mathcal{F} (h(y^{[k]}))\\
	        &= X^{[0]} + W_2 \ast H(Y^{[k]}),
	\end{aligned}
	\label{frequency-2}
\end{equation}
\noindent
where $\ast$ is the element-wise multiplication operator, $h(x)$ is the  ReLU function, $\mathcal{F}$ denotes the Fourier transform, and each capitalized letter denotes  the Fourier transform of the corresponding variable.
	
Eq.~\ref{frequency-2} represents the nonlinear case when using Fourier transform to analyze the detector. This  remains a challenge in terms of interpretability for deep learning, because of the nonlinear ReLU function. Theorem 1 shows that the Leaky ReLU function can reduce the energy of a signal's Fourier transform and at the same time can be further used to prove the feature learning stability of IFF (Section 3.2).

\begin{theorem}
	For any input feature map $x$, it satisfies
	\begin{equation}
	\begin{aligned}
	\|\mathcal{F}(h(x))\|\leq\|\mathcal{F}(x)\|,
	\end{aligned}
	\label{theorem1}
	\end{equation}
	where  $\|\cdot\|$ denotes the Frobenius norm of a matrix.
\end{theorem}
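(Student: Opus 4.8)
The plan is to move the problem out of the frequency domain entirely and reduce it to an elementary pointwise estimate in the spatial domain. The key observation is that the Fourier transform is, up to a fixed normalization constant, an isometry with respect to the Frobenius norm, by Parseval's theorem. Concretely, there is a constant $c$ depending only on the size of the feature map and the chosen normalization such that $\|\mathcal{F}(f)\| = c\,\|f\|$ for every signal $f$. Applying this identity to both $f = h(x)$ and $f = x$, and noting that the \emph{same} constant $c$ appears on both sides, the claimed inequality $\|\mathcal{F}(h(x))\| \le \|\mathcal{F}(x)\|$ is seen to be equivalent to the spatial-domain statement $\|h(x)\| \le \|x\|$. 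This reduction is precisely what sidesteps the nonlinearity-in-the-frequency-domain difficulty that the text flags immediately before the theorem: one never has to analyze $H(Y^{[k]})$ directly.

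With that reduction in hand, I would prove $\|h(x)\| \le \|x\|$ by an entrywise argument. Writing Leaky ReLU as $h(t) = t$ for $t \ge 0$ and $h(t) = \alpha t$ for $t < 0$ with slope $\alpha \in (0,1)$, I would first verify the pointwise contraction $|h(t)| \le |t|$ for every real $t$: equality holds on the nonnegative part, while on the negative part $|h(t)| = \alpha|t| \le |t|$ since $\alpha < 1$. Consequently $h(x_i)^2 \le x_i^2$ for each entry $x_i$ of the feature map. Summing this bound over all entries gives $\|h(x)\|^2 = \sum_i h(x_i)^2 \le \sum_i x_i^2 = \|x\|^2$, and taking square roots yields $\|h(x)\| \le \|x\|$. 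Combined with the Parseval reduction of the first step, this establishes the theorem.

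I do not expect a genuine obstacle here, since the entrywise contraction of Leaky ReLU is trivial and the real content lies in recognizing that Parseval converts the frequency-domain claim into a spatial-domain one. The only step requiring care is the bookkeeping of the Fourier normalization constant: one must be sure the identity $\|\mathcal{F}(f)\| = c\,\|f\|$ is stated with a constant $c$ that is independent of $f$, so that it cancels cleanly between the two sides; an inconsistently normalized Parseval identity would be the one way to derail an otherwise routine argument.
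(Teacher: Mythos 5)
Your proposal is correct and follows essentially the same route as the paper: the paper likewise establishes the entrywise contraction $\sum_i |h(x_i)|^2 \le \sum_i |x_i|^2$ from the definition of Leaky ReLU and then invokes Parseval's theorem (their Lemma 1, with the consistent $1/(WH)$ normalization on both sides) to transfer the inequality to the frequency domain. The only cosmetic difference is the order of the two steps.
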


\begin{proof}
	\label{proof2}
	The Leaky ReLU is defined as
	\begin{equation}
	\begin{aligned}
	h(x)=
	\left\{
	\begin{array}{lr}
	x,& x>0,\\
	\lambda x,& x\leq0,
	\end{array}
	\right.\nonumber
	\end{aligned}
	\end{equation}
	where $\lambda\in(0, 1)$. If $x$ is a matrix of size $W \times H$, we use $x_i$ to denote its elements, where $i\in{[0, WH-1]}$. According to the definition of the Leaky ReLU function, it is obvious that
	\begin{equation}
	\begin{aligned}
	\sum_{i=0}^{WH-1}|h(x_i)|^2\leq\sum_{i=0}^{WH-1}|x_i|^2.
	\end{aligned}
	\end{equation}
	From Lemma 1 in the appendix, we can obtain
	\begin{equation}
	\begin{aligned}
	\sum_{i=0}^{WH-1}|h(x_i)|^2 = \frac{1}{WH}\sum_{k=0}^{WH-1}|H(X_k)|^2,
	\end{aligned}
	\end{equation}
	and
	\begin{equation}
	\begin{aligned}
	\sum_{i=0}^{WH-1}|x_i|^2 = \frac{1}{WH}\sum_{k=0}^{WH-1}|X_k|^2,
	\end{aligned}
	\end{equation}
	where $X_k$ denotes the element of $X$, which is the Fourier transform of $x$. Then we have
	\begin{equation}
	\begin{aligned}
	\|\mathcal{F}(h(x))\|\leq\|\mathcal{F}(x)\|,
	\end{aligned}
	\label{theorem11}
	\end{equation}
	which proves Theorem 1.
\end{proof}
	
\subsection{Feature Learning based on Feedback}
	
In this section, we prove the robustness of iffDetector based on the Fourier transform. As mentioned previously, we assume there exists an ideal feature map $n$  for an input image, with $N$ as its corresponding Fourier transform. However, for any input image, we often have a ``polluted" feature map $X^{[0]}$ generated by the CNN backbone defined as
\begin{equation}
	\begin{aligned}
	X^{[0]} = N + \Delta,
	\end{aligned}
	\label{definition-N}
\end{equation}
\noindent
where $\Delta$ denotes the Fourier transform of the noise $\delta$. Correspondingly, we also have an ideal output $y_n$ defined as
\begin{equation}
	\label{definition-YN}
	\begin{aligned}
	y_n = w_1 \otimes n,\\
	Y_N = W_1 \ast N.
	\end{aligned}
\end{equation}

We define the optimization objective function as
\begin{equation}
	\begin{aligned}
	V(Y) = {\left\|Y-Y_N\right\|}^2,
	\end{aligned}
	\label{definition-V}
\end{equation}
\noindent
where $\left\|\cdot\right\|$ denotes the Frobenius norm,  $V$ is a positive definite function and $V(Y)=0$ only when $Y=Y_N$.

Eq.~\ref{definition-V} represents the deviation between the actual output $Y$ and the ideal output $Y_N$. We develop Theorem 2 to show that as the distance between $X^{[k]}$ and  $N$ increases, the distance between $Y^{[k+1]}$ and $Y_N$ is reduced by IFF, which guarantees the stability of our iffDetector. 

\begin{theorem}
In iffDetector, there always exists $\epsilon>0$, such that when $\|X^{[k]}-N\| > \epsilon$, it follows that
	\begin{equation}
	\begin{aligned}
	V(Y^{[k+1]})\le V(Y^{[k]}),
	\end{aligned}
	\end{equation}
    where $V(\cdot)$ is defined in Eq.~\ref{definition-V}.
    \label{theorem2}
\end{theorem}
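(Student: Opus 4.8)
The plan is to treat $V$ as a discrete Lyapunov function and establish a contraction estimate, using Theorem~1 to neutralize the nonlinearity. First I would move to error coordinates. Since $Y^{[k]}=W_1\ast X^{[k]}$ and $Y_N=W_1\ast N$, subtracting gives $Y^{[k]}-Y_N=W_1\ast(X^{[k]}-N)$, so that $V(Y^{[k]})=\|W_1\ast(X^{[k]}-N)\|^2$. Substituting the update Eq.~\ref{frequency-2} together with $X^{[0]}=N+\Delta$ from Eq.~\ref{definition-N} yields $Y^{[k+1]}-Y_N=W_1\ast\Delta+W_1\ast W_2\ast H(Y^{[k]})$, and hence $V(Y^{[k+1]})=\|W_1\ast\Delta+W_1\ast W_2\ast H(Y^{[k]})\|^2$. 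Writing $v_k=\sqrt{V(Y^{[k]})}$ recasts the goal as proving $v_{k+1}\le v_k$ once $\|X^{[k]}-N\|$ is large enough.

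The key step is to derive an affine recursion for $v_k$. By the triangle inequality, $v_{k+1}\le\|W_1\ast\Delta\|+\|W_1\ast W_2\ast H(Y^{[k]})\|$. Because $\ast$ is the element-wise product, the second term is bounded by $\gamma\,\|H(Y^{[k]})\|$ with loop gain $\gamma=\max_k|W_{1,k}W_{2,k}|$. Here Theorem~1 does the decisive work: it converts the intractable nonlinear term into the linear energy bound $\|H(Y^{[k]})\|\le\|Y^{[k]}\|\le\|Y_N\|+v_k$. Collecting terms gives $v_{k+1}\le\gamma\,v_k+C$ with the constant $C=\|W_1\ast\Delta\|+\gamma\|Y_N\|$ depending only on the noise and the ideal output.

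It remains to extract the threshold. Provided the loop gain obeys $\gamma<1$ (the negative-feedback stability condition, mirroring a loop gain below unity in classical control), the affine map $v\mapsto\gamma v+C$ has fixed point $C/(1-\gamma)$, and $v_{k+1}\le v_k$ whenever $v_k\ge C/(1-\gamma)$. Translating back to the feature domain, $v_k=\|W_1\ast(X^{[k]}-N)\|\ge\mu\,\|X^{[k]}-N\|$ with $\mu=\min_k|W_{1,k}|$, so setting $\epsilon=C/\big(\mu(1-\gamma)\big)$ ensures that $\|X^{[k]}-N\|>\epsilon$ forces $v_k>C/(1-\gamma)$, whence $v_{k+1}\le v_k$ and therefore $V(Y^{[k+1]})\le V(Y^{[k]})$, as claimed.

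The main obstacle is the nonlinear ReLU term $H(Y^{[k]})$, and the argument is engineered so that Theorem~1 is exactly the tool that linearizes its energy; without it the recursion cannot be closed. Two subsidiary points need care. First, the contraction needs $\gamma<1$, which I would either take as the stated stability hypothesis of the iffDetector or secure from the magnitude normalization of $W_1$ and $W_2$ (the paper already restricts the filters to a small, lightweight form). Second, passing from the output-space decrease of $v_k$ back to the feature-space threshold on $\|X^{[k]}-N\|$ requires $W_1$ to have no vanishing frequency component, i.e. $\mu>0$; if some $W_{1,k}=0$ the estimate only controls the error as seen through $W_1$, and I would state the conclusion on that subspace.
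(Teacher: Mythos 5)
Your proof is correct and reaches the same conclusion under essentially the same hypotheses as the paper (a small-gain condition on the filters plus Theorem~1 to tame the ReLU), but the technical route is genuinely different. The paper expands the difference $V(Y^{[k+1]})-V(Y^{[k]})$ directly using Lemma~2 and the reverse triangle inequality, collects everything into a quadratic $A\|X^{[k]}\|^2+B\|X^{[k]}\|+C$ with $A=\|W_1\|^2(\|W_1\|^2\|W_2\|^2-1)<0$, and concludes from the sign of a downward-opening parabola for large $\|X^{[k]}\|$. You instead work with $v_k=\sqrt{V(Y^{[k]})}$, obtain the affine recursion $v_{k+1}\le\gamma v_k+C$ by one triangle inequality plus Theorem~1, and read off the threshold from the fixed point $C/(1-\gamma)$. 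Your version buys two things: the admissibility condition $\gamma=\max_k|W_{1,k}W_{2,k}|<1$ and the explicit threshold $\epsilon=C/(\mu(1-\gamma))$ are cleaner than the paper's implicit ``there exists $\epsilon$ by the property of a quadratic,'' and your passage from output space back to feature space via $v_k\ge\mu\|X^{[k]}-N\|$ with $\mu=\min_k|W_{1,k}|$ is the correct direction of the element-wise-product bound, whereas the paper's corresponding step (replacing $-\|Y^{[k]}\|^2$ by $-\|W_1\|^2\|X^{[k]}\|^2$) uses the submultiplicative inequality in the wrong direction and is only heuristically justified. Your explicit caveat that the conclusion degrades to the range of $W_1$ when some frequency component vanishes is a point the paper glosses over. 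What the paper's route buys in exchange is that it stays with the squared Lyapunov function $V$ itself (matching the Lyapunov-function framing it wants to emphasize) and exhibits the constants $A$, $B$, $C$ that are later evaluated numerically in the experiments.
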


\begin{proof}
	\label{proof4}
	For simplicity, we first define
	\begin{equation}
	\begin{aligned}
	V^{'}(Y^{[k]})&=V(Y^{[k+1]})-V(Y^{[k]})\\
	& = \|Y^{[k+1]}-Y_N\|^2-\|Y^{[k]}-Y_N\|^2.
	\end{aligned}
	\label{V'1}
	\end{equation}
	Plugging Eqs.~\ref{frequency-1}, \ref{frequency-2} and \ref{definition-YN} into $\|Y^{[k+1]}-Y_N\|^2$ results in 
	\begin{equation}
	\begin{aligned}
	V^{'}(Y^{[k]})&=\|W_1\ast (X^{[0]}-N)+W_1\ast W_2\ast H(W_1\ast X^{[k]})\|^2\\
	&\quad -\|Y^{[k]}-Y_N\|^2.
	\end{aligned}
	\end{equation}
	Besides, it follows that
	\begin{equation}
	\label{bczhangtem}
	\begin{aligned}
	-\|Y^{[k]}-Y_N\|^2&\leq-(\|Y^{[k]}\|-\|Y_N\|)^2\\
	&=-(\|Y^{[k]}\|^2-2\|Y^{[k]}\|\|Y_N\|+\|Y_N\|^2).
	\end{aligned}
	\end{equation}
	 Based on Lemma 2, Eqs. \ref{definition-N}, \ref{definition-YN}, \ref{bczhangtem} and Theorem 1,  we have
	\begin{equation}
	\begin{aligned}
	V'(Y^{[k]})&\leq \|W_1\|^2\|\Delta\|^2
	\quad + 2\|W_1\|^2 \|\Delta\|\|W_2\|\|H(W_1\ast X^{[k]})\|\\
	&\quad + \|W_1\|^2\|W_2\|^2\|H(W_1\ast X^{[k]})\|^2 - \|W_1\|^2\|X^{[k]}\|^2 \\
	&\quad + 2\|W_1\|^2\|X^{[k}]\|\|N\| - \|W_1\|^2\|N\|^2\\
	&\leq \|W_1\|^2\|\Delta\|^2 + 2\|W_1\|^2\|\Delta\|\|W_2\|\|W_1\|\|X^{[k]}\|\\
	&\quad + \|W_1\|^4\|W_2\|^2\|X^{[k]}\|^2 - \|W_1\|^2\|X^{[k]}\|^2\\
	&\quad + 2\|W_1\|^2\|X^{[k]}\|\|N\| - \|W_1\|^2\|N\|^2\\
	& = \|W_1\|^2(\|W_1\|^2\|W_2\|^2 - 1)\|X^{[k]}\|^2\\
	&\quad + 2(\|W_1\|^2\|N\| + \|W_1\|^3\|\Delta\|\|W_2\|)\|X^{[k]}\|\\
	&\quad +\|W_1\|^2(\|\Delta\|^2-\|N\|^2).
	\end{aligned}
	\label{bczhangcore}
	\end{equation}
	 We note that  $W_1$ and $W_2$ are $3$D filters, however, the convolution operation in CNNs is actually the  $2$D convolution. We can only choose the  $2$D matrix of the maximum norm and then directly use Theorem 1.  By setting
	\begin{equation}
	\begin{aligned}
	\left\{
	\begin{array}{ll}
	A = \|W_1\|^2(\|W_1\|^2\|W_2\|^2-1\|),\\
	B = 2(\|W_1\|^2\|N\| + \|W_1\|^3\|\Delta\|\|W_2\|),\\
	C = \|W_1\|^2(\|\Delta\|^2-\|N\|^2),
	\end{array}
	\right.
	\end{aligned}
	\end{equation}
	we obtain
	\begin{equation}
	\label{bczhang}
	\begin{aligned}
	V^{'}(Y^{[k]}) \leq A\|X^{[k]}\|^2 + B\|X^{[k]}\| + C,
	\end{aligned}
	\end{equation}
	where  $A$, $B$ and $C$ are constants\footnote{For a certain input image and forward propagation, the original feature map $x^{[0]}$, the ideal feature map $N$, filters $W_1$ and $W_2$ are all constants.}. By inserting Eq.~\ref{definition-N} into Eq. \ref{frequency-2}, we have
	\begin{equation}
	\begin{aligned}
	X^{[k]} &= N + [\Delta +W_2\ast H(Y^{[k-1]})]\\
	        &= N + \Delta^{[k]},
	\label{mmy}
	\end{aligned}
	\end{equation}
	where $\Delta^{[k]}$ is a variable and equals $X^{[k]}-N$. Inserting Eq.~\ref{mmy} into Eq.\ref{bczhang}, we obtain 
	\begin{equation}
	\label{A}
	\begin{aligned}
	V^{'}(Y^{[k]}) \leq A\|N + \Delta^{[k]}\|^2 + B\|N + \Delta^{[k]}\| + C,
	\end{aligned}
	\end{equation}
	 where $A \le 0$ is easily satisfied because  $\|W_1\|^2 \|W_2\|^2 < 1$  can be achieved by controlling by $W_2$.  According to the property of a quadratic function, there always exists $\epsilon$, such that when $\|\Delta^{[k]}\|=\|X^{[k]}-N\| > \epsilon$, we have $A\|X^{[k]}\|^2+B\|X^{[k]}\|+C < 0$. That is to say, $V(Y^{[k+1]})\leq V(Y^{[k]})$, which proves Theorem 2.
\end{proof}

Theorem 2 indicates that once the deviation of $X^{[k]}$ from $N$ is too large,  $Y^{[k+1]}$ will move back to $Y_N$ (the ideal output), which proves the stability of the filtering module. Eq.~\ref{definition-V} is a classical Lyapunov function \cite{LyapunovThe}, which links the CNN-based detectors to the control theory that explains the feedback strategy can bring robustness to our iffDetector. 

\section{Experiments}
	
In this section, we present the implementation of IFF for detectors to evaluate the effect of the proposed filtering module. We also compare iffDetector with the counterpart and the state-of-the-art approaches. Experiments are carried out on several datasets, including PASCAL VOC2007, PASCAL VOC2012, and MSCOCO 2017.

\subsection{Implementation Details}
	
IFF is implemented on FreeAnchor \cite{zhang2019freeanchor} and CenterNet \cite{CenterNet2019}, which are the state-of-the-art one-stage anchor-based and anchor-free detectors, respectively. We also test the filtering module on other mainstream detectors including SSD \cite{SSD16}, YOLOv2 \cite{YOLO9000}, and FoveaBox \cite{kong2019foveabox}. 
Since we only modify the head of the detector, the pretrained weights of the backbone can be used which saves training time. The training and test times of iffDetectors are almost the same as their baselines. To select the best number of filtering modules ($M_I$), we test our iffDetector based on FreeAnchor with different $M_I$. To demonstrate the generality of IFF, we carry out experiments on three datasets: COCO 2017, and PASCAL VOC2007 and 2012. We use 8 Tesla V100 GPUs for the experiments on FreeAnchor (with ResNeXt-101 as the backbone) and 4 TITAN Xp GPUs for other experiments. The training schedule and batch size are adjusted according to the linear scaling rule \cite{Goyal_2019}.

\subsection{The Choice of $M_I$}

As we mentioned in Section $2$, the number of filtering modules is flexible. To select an appropriate value of $M_I$, we test iffDetector using FreeAnchor (with ResNet-50 as the backbone) on COCO 2017 validation set with different $M_I$, and the results are shown in Table~\ref{M_I}. We find that having more filtering modules does not guarantee better performance. Furthermore, having more filtering modules means more training and inference time. Thus, we set $M_I$ to 1 in all our experiments. Note that $M_I = 0$ corresponds to the original detector without feedback.
\begin{table}[]
	\begin{center}
		\caption{iffDetector performance with different $M_I$.}
		\label{M_I}
		\begin{tabular}{p{1cm}<{\centering} p{2cm}<{\centering}p{1.6cm}<{\centering}p{1.6cm}<{\centering}p{1.6cm}<{\centering}}
			\specialrule{0.1em}{3pt}{3pt}
			$M_I$ & 0    & 1    & 2  &3 \\
			\specialrule{0.1em}{3pt}{3pt}
			mAP  &38.5 &39.3 &39.2 &39.2\\
			\specialrule{0.1em}{3pt}{3pt}
		\end{tabular}
	\end{center}
\end{table}

\subsection{Model Effect}

Feature maps generated by the backbones may have noise such that some regions without object are falsely activated.
We use SSD with or without IFF trained with VOC2007 to illustrate the advantages of effect of the proposed module.
Our filtering module at inference reuses the heuristic information in the original predictions, successfully denoising the feature maps and making the correctly activated regions more intensive, thus leading to more accurate predictions  (see Fig.~\ref{exp}). When multiple objects of the same class appear very close to each other, the inaccurate activated regions on feature maps usually lead to missing detections. We have $6$ feature maps for prediction, and choose the corresponding kernels of the maximum norm. We calculate $A$ in Eq. \ref{A} and obtain  $A = -0.361, -0.301, -0.226, -0.185, -0.201, -0.149$ in a descent order, which are  consistent with Theorem 2. 
Furthermore, iffDetector achieves a better performance with almost the same inference cost, $e.g.$, SSD operates at $45.45$ frames per second (FPS), while iffDetector achieves $44.62$ FPS.
\begin{figure}
\centering
\includegraphics[height=3.5cm]{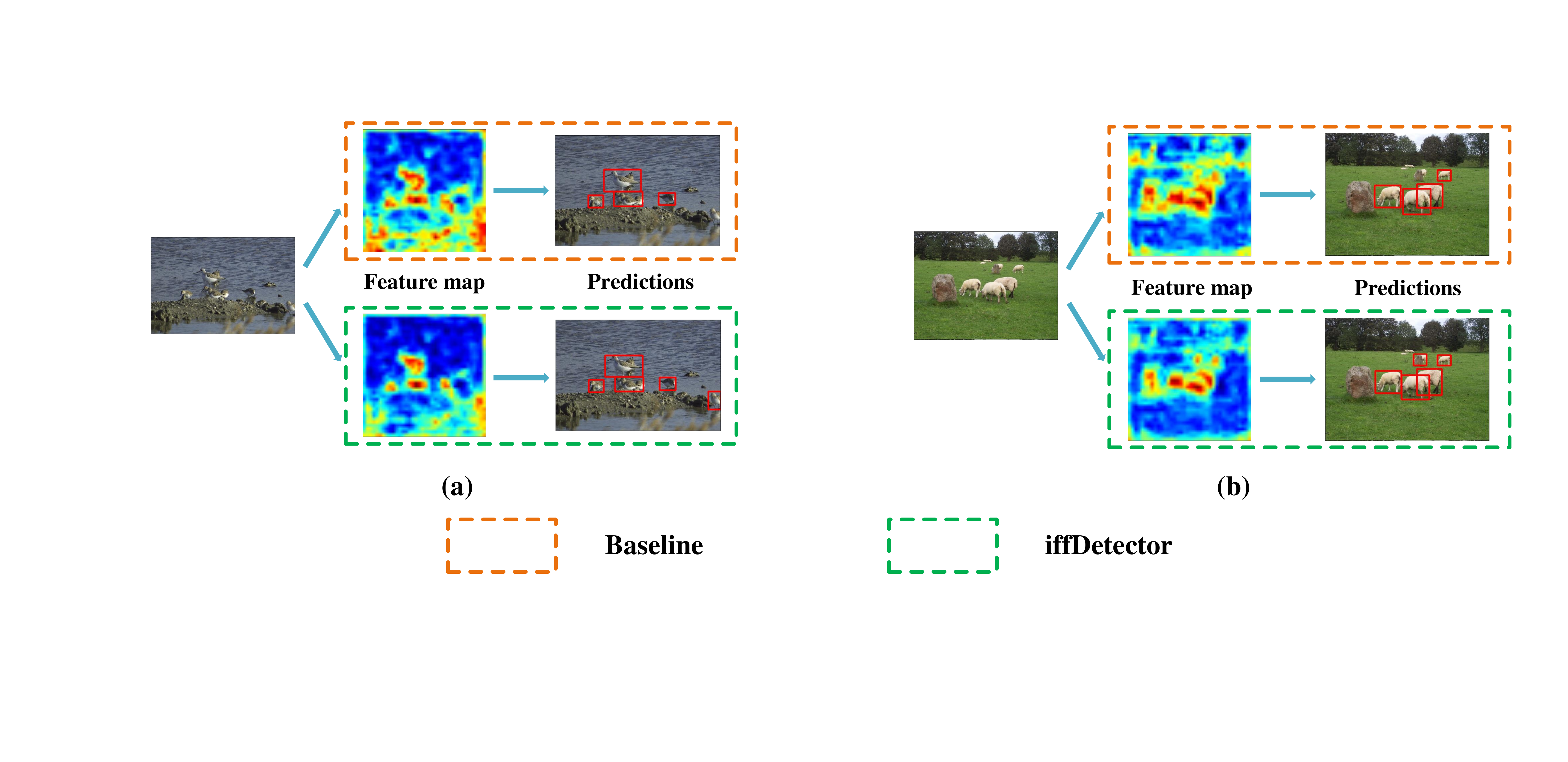}
\caption{Visualization of features w/o IFF: (a) and (b) are the detection results of two input images. To achieve more detailed and direct illustration, we upsample the feature maps to $300 \times 300$, add all the channels, and convert them to heatmaps, where the red regions represent stronger activations.}
\label{exp}
\end{figure}

\begin{figure}[ht]
	\centering
	\subfigure[Background]{\includegraphics[height=2.5cm]{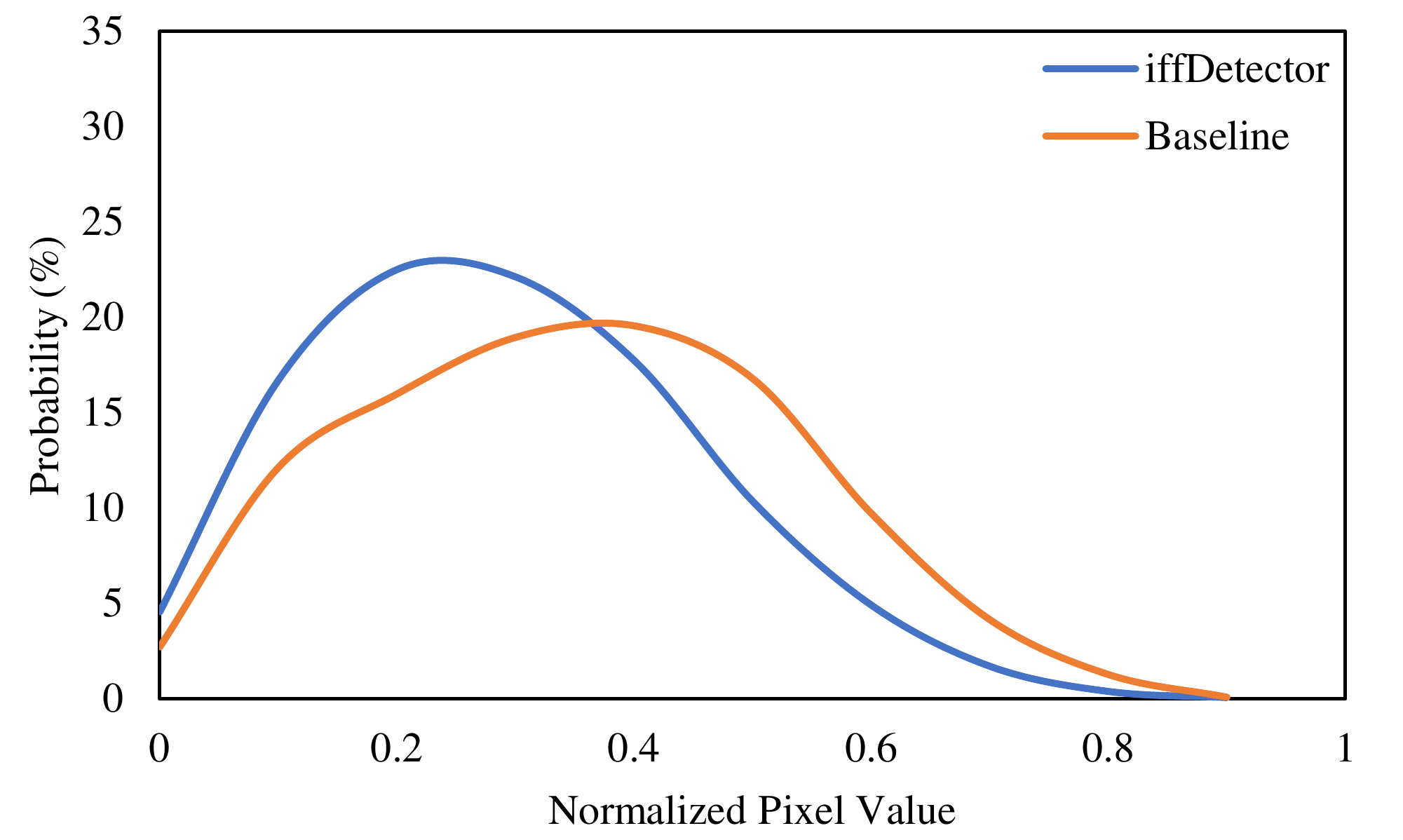}}
	\subfigure[Foreground]{\includegraphics[height=2.5cm]{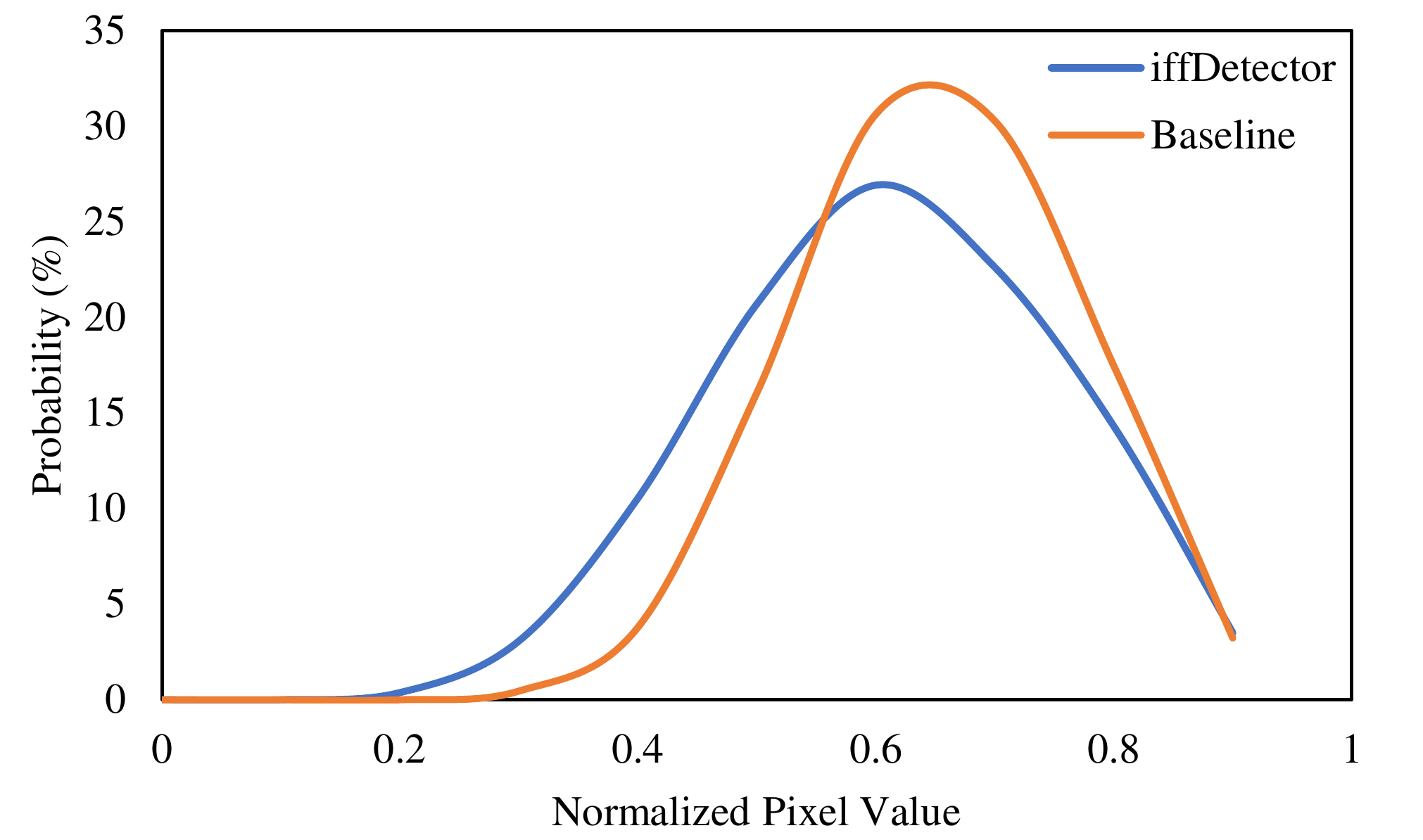}} \\
	\caption{The energy distributions on the feature maps. The baseline is SSD \cite{SSD16}.}
	\label{denoise}
\end{figure}

IFF is designed to filter undesired features enforcing the foreground while depressing the background. To verify this, we show the distribution of pixel values in the background of the feature map, which can be regarded as the energy distribution (see Fig.~\ref{denoise}). To do this, we randomly select 60 images from VOC2007 to obtain two groups of feature maps (original feature maps and feature maps optimized by IFF). For every feature map, we add all the channels together and normalize the pixel values. We then compute the distribution of the pixel values of each group, resulting in two curves corresponding to the two groups.  Fig.~\ref{denoise}(a) shows that the energy of the background obtained by iffDetector is clearly lower than that of the original detector which means IFF successfully depresses the background. Meanwhile,  considering the strongly activated regions of the foreground obtained by iffDetector is more concentrated (see Fig.~\ref{exp}), the energy of the foreground obtained by iffDetector is also a little bit lower than that of the original detector, which is beneficial to object detection, because in most cases the center points of the objects generate the best bounding boxes.
\begin{table}[]
    \begin{center}
        \caption{Detection performance comparison of our iffDetectors and the baselines on PASCAL VOC.}
        \label{baseline-VOC}
        \begin{tabular}{lccc}
            \Xhline{1pt}\noalign{\smallskip}
            Backbone & Detector    & Training    & $mAP$\\
            \noalign{\smallskip}
            \Xhline{1pt}
            \noalign{\smallskip}
            \multirow{2}*{Darknet-19}  & YOLOv2$~416\times 416$
			&\multirow{2}*{VOC07}   & 59.3 \\
			&\textbf{iffDetector} & & \textbf{60.4} \\
			\hline
			\multirow{2}*{VGG-16} & SSD300
			&\multirow{2}*{VOC07+12}  & 77.1 \\
			&\textbf{iffDetector} & & \textbf{78.0} \\
			\Xhline{1pt}
        \end{tabular}
    \end{center}
\end{table}

\begin{table}[]
    \begin{center}
        \caption{Detection performance comparison of our iffDetectors and the baselines on COCO 2017.}
        \label{baseline-COCO}
        \begin{tabular}{llccccccc}
            \Xhline{1pt}\noalign{\smallskip}
            Backbone & Detector  &Parameter Size (M) & $AP$    & $AP_{50}$    & $AP_{75}$    & $AP_S$    & $AP_M$    & $AP_L$\\
            \noalign{\smallskip}
            \Xhline{1pt}
            \noalign{\smallskip}
			\multirow{2}*{ResNet-50} &FoveaBox &36.5  &36.7 &56.6 &39.2 &20.3 &39.9 &45.3\\
			&\textbf{iffDetector}   &36.6 ($\uparrow 0.5\%$)       &\textbf{37.6} &\textbf{57.4} &\textbf{40.4} &\textbf{21.2} &\textbf{40.8} &\textbf{46.7}\\
			
			\hline
			\multirow{2}*{ResNet-50} &FreeAnchor  &33.8 &38.7 &57.3 &41.6 &20.2 &41.3 &50.1\\
			&\textbf{iffDetector}  &34.1 ($\uparrow 0.8\%$) &\textbf{39.6}&\textbf{58.5} &\textbf{42.7} &\textbf{21.3} &\textbf{42.2} &\textbf{50.6}\\	
			
			\hline
			\multirow{2}*{Hourglass-104} &CenterNet &191.3 &45.1 &63.9 &49.3 &26.6 &47.1 &57.7\\
			&\textbf{iffDetector}     &191.7 ($\uparrow 0.2\%$)     &\textbf{45.8} &\textbf{64.8} &\textbf{50.0} &\textbf{27.7} &\textbf{47.8} &\textbf{58.6}\\
			
			\hline
			\multirow{2}*{ResNeXt-101} &FreeAnchor &91.5  &47.3 &66.3 &51.5 &30.6 &50.4 &59.0\\
			&\textbf{iffDetector}    &93.2 ($\uparrow 1.8\%$)      &\textbf{47.9} &\textbf{66.7} &\textbf{52.2} &\textbf{30.7} &\textbf{50.7} &\textbf{59.9}\\
			\Xhline{1pt}
        \end{tabular}
    \end{center}
\end{table}

\subsection{Detection Performance}
	
We compare the proposed iffDetector with multiple baselines, including anchor-based approaches, YOLOv2 \cite{YOLO9000}, SSD \cite{SSD16}, and Freeanchor \cite{zhang2019freeanchor}, and anchor-free approaches, Foveabox \cite{kong2019foveabox} and CenterNet \cite{CenterNet2019}. More experimental results are also provided in the supplementary material, including training curves,  result analysis and other implementation details.

For YOLOv2 and SSD, we upgrade each with our IFF by setting the size of the filters ($w_2$ in Fig.~\ref{architecture}) to $1\times 1$ and test if the iffDetectors outperform the baselines. As shown in Table~\ref{baseline-VOC}, the IFF module works very well on the PASCAL VOC datasets. For other baselines, considering their backbones are large, we set the size of the filters to $3\times 3$ and test the iffDetectors on COCO 2017 (see Table~\ref{baseline-COCO}). Again, IFF consistently improves the baselines on all evaluation metrics with less than a 2\% increase in the number of parameters. Note that all the performance gains are achieved with negligible extra cost of training and inference time. More importantly, compared to the most recent FreeAnchor (on ResNeXt-101), iffDetector respectively improves the $AP$ and $AP_L$ up to 0.6\% and 0.9\%, which are significant margins in terms of the challenging object detection task.

iffDetector is also compared with other state-of-the-art detectors in Table~\ref{SOTA} using standard settings,  ResNet as the backbone. Note that iffDetector$^\ast$ is acquired by updating the corresponding FreeAnchor detectors. Besides, Advanced settings use the jitter over scales of $\{640, 672, 704,  736, 768, 800\}$ during training on the ResNeXt-32x8d-101 backbone. The experiments show that iffDetector outperform both the anchor-based counterparts (RetinaNet \cite{FocalLoss17} and FreeAnchor \cite{zhang2019freeanchor}) and the anchor-free approaches (FoveaBox \cite{kong2019foveabox}, CornerNet \cite{CornerNet2018} and CenterNet \cite{CenterNet2019}). With less than a 2\% increase in the number of parameters and almost the same training and inference time,  IFF upgrades the state-of-the-art detectors with significant margins.
\begin{table}[]
    \begin{center}
        \caption{Performance comparison with state-of-the-art detectors.}
        \label{SOTA}
        \begin{tabular}{llccccccc}
            \Xhline{1pt}\noalign{\smallskip}
            Detector & Backbone  &Iter. & $AP$   &$AP_{50}$    &$AP_{75}$    &$AP_S$    &$AP_M$    &$AP_L$\\
            \noalign{\smallskip}
            \Xhline{1pt}
            \noalign{\smallskip}
            RetinaNet \cite{FocalLoss17}& ResNet-50 &70k     &35.7    &55.0         &38.5         &19.9      &38.9      &46.3  \\
		    FoveaBox \cite{kong2019foveabox}& ResNet-50 &70k     &36.7    &56.6         &39.2      &20.3       &39.9     &45.3     \\
			FreeAnchor \cite{zhang2019freeanchor}&ResNet-50&90k      &38.7    &57.3         &41.6         &20.2      &41.3      &50.1  \\
			\textbf{iffDetector$^\ast$(ours)}&ResNet-50& 90k    &\textbf{39.6}&\textbf{58.5} &\textbf{42.7} &\textbf{21.3} &\textbf{42.2} &\textbf{50.6}\\
			\hline
			CornerNet \cite{CornerNet2018}&Hourglass-104&500k      &40.6    &56.4         &43.2         &19.1      &42.8      &54.3  \\
			RetinaNet \cite{FocalLoss17}&ResNeXt-101&135k      &40.8    &61.1         &44.1         &24.1      &44.2      &51.2  \\
			FoveaBox \cite{kong2019foveabox}&ResNeXt-101&135k      &42.1    &61.9         &45.2         &24.9      &46.8      &55.6  \\
			FSAF \cite{zhu2019feature}&ResNeXt-101 &180k     &44.6    &65.2         &48.6     &29.7      &47.1      &54.6\\
			Cascade R-CNN \cite{cascade18}&ResNeXt-101 &180k     &44.9    &63.7         &48.9     &25.9      &47.7      &57.1\\
			CenterNet \cite{CenterNet2019}&Hourglass-104&480k      &45.1 &63.9 &49.3         &26.6   &47.1       &57.7  \\
			FreeAnchor \cite{zhang2019freeanchor}&ResNeXt-101&180k       &47.3    &66.3         &51.5         &30.6      &50.4      &59.0  \\
			\textbf{iffDetector$^\ast$(ours)}&ResNeXt-101&180k      &\textbf{47.9} &\textbf{66.7} &\textbf{52.2} &\textbf{30.7} &\textbf{50.7} &\textbf{59.9}\\
			\Xhline{1pt}
        \end{tabular}
    \end{center}
\end{table}
\vspace{-1.7em}

\section{Conclusion}
	
We have proposed an elegant and effective approach referred to as Inference-aware Feature Filtering (IFF), for visual object detection. The proposed IFF upgrades both anchor-based and anchor-free frameworks to new detectors with a filtering ability by introducing a feedback architecture to the inference procedure and  achieves more accurate and robust predictions. With IFF implemented, we have significantly improved the performance of object detection over  baseline detectors. The underlying effect is that IFF denoises the feature maps and facilitates generating new feature maps with more precise and intensive activated regions, which lead to improved detection results. 
Our IFF approach provides a fresh insight to the visual object detection problem.


\clearpage
%
%
\bibliographystyle{splncs04}
\bibliography{reference}

\begin{thebibliography}{10}
\providecommand{\url}[1]{\texttt{#1}}
\providecommand{\urlprefix}{URL }
\providecommand{\doi}[1]{https://doi.org/#1}

\bibitem{cascade18}
Cai, Z., Vasconcelos, N.: Cascade r-cnn: Delving into high quality object
  detection. In: IEEE CVPR. pp. 6154--6162 (2018)

\bibitem{CenterNet2019}
Duan, K., Bai, S., Xie, L., Qi, H., Huang, Q., Tian, Q.: Centernet: Object
  detection with keypoint triplets. In: IEEE CVPR (2019)

\bibitem{Fu2016dssd}
Fu, C.Y., Liu, W., Ranga, A., Tyagi, A., Berg, A.C.: {DSSD}: Deconvolutional
  single shot detector. In: arXiv preprint arXiv:1701.06659

\bibitem{NAS-FPN2019}
Ghiasi, G., Lin, T., Le, Q.V.: {NAS-FPN:} learning scalable feature pyramid
  architecture for object detection. In: IEEE CVPR. pp. 7036--7045 (2019)

\bibitem{FastRCNN15}
Girshick, R.B.: Fast {R-CNN}. In: IEEE ICCV. pp. 1440--1448 (2015)

\bibitem{RCNN14}
Girshick, R.B., Donahue, J., Darrell, T., Malik, J.: Rich feature hierarchies
  for accurate object detection and semantic segmentation. In: IEEE CVPR. pp.
  580--587 (2014)

\bibitem{Goyal_2019}
Goyal, P., Doll{\'{a}}r, P., Girshick, R., Noordhuis, P., Wesolowski, L.,
  Kyrola, A., Tulloch, A., Jia, Y., He, K.: Accurate, large minibatch sgd:
  training imagenet in 1 hour. arXiv: 1706.02677  (2017)

\bibitem{ResNet16}
He, K., Zhang, X., Ren, S., Sun, J.: Deep residual learning for image
  recognition. In: IEEE CVPR. pp. 770--778 (2016)

\bibitem{he2018bounding}
He, Y., Zhu, C., Wang, J., Savvides, M., Zhang, X.: Bounding box regression
  with uncertainty for accurate object detection. arXiv:1809.08545  (2018)

\bibitem{1104893}
Horowitz, I.: Fundamental theory of automatic linear feedback control systems.
  IRE Transactions on Automatic Control  \textbf{4}(3),  5--19 (1959)

\bibitem{IoU-Net18}
Jiang, B., Luo, R., Mao, J., Xiao, T., Jiang, Y.: Acquisition of localization
  confidence for accurate object detection. In: ECCV. pp. 784--799 (2018)

\bibitem{kong2019foveabox}
Kong, T., Sun, F., Liu, H., Jiang, Y., Shi, J.: Foveabox: Beyond anchor-based
  object detector. arXiv:1904.03797  (2019)

\bibitem{CornerNet2018}
Law, H., Deng, J.: Cornernet: Detecting objects as paired keypoints. In: ECCV.
  pp. 765--781 (2018)

\bibitem{FPN17}
Lin, T., Doll{\'{a}}r, P., Girshick, R.B., He, K., Hariharan, B., Belongie,
  S.J.: Feature pyramid networks for object detection. In: IEEE CVPR. pp.
  936--944 (2017)

\bibitem{FocalLoss17}
Lin, T., Goyal, P., Girshick, R.B., He, K., Doll{\'{a}}r, P.: Focal loss for
  dense object detection. In: IEEE ICCV. pp. 2999--3007 (2017)

\bibitem{Survey2019}
Liu, L., Ouyang, W., Wang, X., Fieguth, P., Chen, J., Liu, X., Pietikainen, M.:
  Deep learning for generic object detection: A survey. Int. J. Comp. Vis.
  (2019)

\bibitem{Liu_2018_ECCV}
Liu, S., Huang, D., Wang, Y.: Receptive field block net for accurate and fast
  object detection. In: ECCV. pp. 385--400 (2018)

\bibitem{SSD16}
Liu, W., Anguelov, D., Erhan, D., Szegedy, C., Reed, S.E., Fu, C., Berg, A.C.:
  {SSD:} single shot multibox detector. In: ECCV. pp. 21--37 (2016)

\bibitem{LyapunovThe}
Lyapunov, A.M., Walker, J.A.: The general problem of the stability of motion.
  Journal of Applied Mechanics  \textbf{61}(1), ~226 (1892)

\bibitem{YOLO16}
Redmon, J., Divvala, S.K., Girshick, R.B., Farhadi, A.: You only look once:
  Unified, real-time object detection. In: IEEE CVPR. pp. 779--788 (2016)

\bibitem{YOLO9000}
Redmon, J., Farhadi, A.: {YOLO9000:} better, faster, stronger. In: IEEE CVPR.
  pp. 6517--6525 (2017)

\bibitem{Redmon_2019}
Redmon, J., Farhadi, A.: Yolov3: An incremental improvement. arXiv: 1804.02767
  (2018)

\bibitem{FasterRCNN15}
Ren, S., He, K., Girshick, R.B., Sun, J.: Faster {R-CNN:} towards real-time
  object detection with region proposal networks. In: NeurIPS. pp. 91--99
  (2015)

\bibitem{TDM16}
Shrivastava, A., Sukthankar, R., Malik, J., Gupta, A.: Beyond skip connections:
  Top-down modulation for object detection. arXiv:1612.06851  (2016)

\bibitem{tian2019fcos}
Tian, Z., Shen, C., Chen, H., He, T.: Fcos: Fully convolutional one-stage
  object detection. arXiv:1904.01355  (2019)

\bibitem{GuidedAnchoring}
Wang, J., Chen, K., Yang, S., Loy, C.C., Lin, D.: Region proposal by guided
  anchoring. In: IEEE CVPR. pp. 2965--2974 (2019)

\bibitem{MetaAnchor2018}
Yang, T., Zhang, X., Li, Z., Zhang, W., Sun, J.: Metaanchor: Learning to detect
  objects with customized anchors. In: NeurIPS. pp. 320--330 (2018)

\bibitem{UnitBox2016}
Yu, J., Jiang, Y., Wang, Z., Cao, Z., Huang, T.S.: Unitbox: An advanced object
  detection network. In: ACM Multimedia Conference. pp. 516--520 (2016)

\bibitem{RefineDet2018}
Zhang, S., Wen, L., Bian, X., Lei, Z., Li, S.Z.: Single-shot refinement neural
  network for object detection. In: IEEE CVPR. pp. 4203--4212 (2018)

\bibitem{zhang2019freeanchor}
Zhang, X., Wan, F., Liu, C., Ji, R., Ye, Q.: {FreeAnchor}: Learning to match
  anchors for visual object detection. In: NeurIPS (2019)

\bibitem{zhu2019feature}
Zhu, C., He, Y., Savvides, M.: Feature selective anchor-free module for
  single-shot object detection. In: IEEE CVPR. pp. 840--849 (2019)

\end{thebibliography}

\section*{Appendix}
	
\begin{lemma}
	\label{lemma1}
	According to Parseval's Theorem, we have 
	\begin{equation}
	\begin{aligned}
	\sum_{n=0}^{N-1}|x_n|^2=\frac{1}{N}\sum_{k=0}^{N-1}|X_k|^2,
	\end{aligned}
	\end{equation}
	where $x_n$ is the signal in the spatial domain, and $X_k=\mathcal{F}(x_n)$. 
\end{lemma}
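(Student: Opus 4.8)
The plan is to prove Parseval's identity for the discrete Fourier transform directly from the definition of the DFT together with the orthogonality of the complex exponentials. First I would fix the convention $X_k=\mathcal{F}(x_n)=\sum_{n=0}^{N-1}x_n e^{-2\pi i kn/N}$, so that the right-hand energy can be written as $\sum_{k=0}^{N-1}|X_k|^2=\sum_{k=0}^{N-1}X_k\overline{X_k}$, where the bar denotes complex conjugation. This reduces the claim to evaluating this single sum of products.

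Next I would expand each factor using the definition, writing $X_k=\sum_{n=0}^{N-1}x_n e^{-2\pi i kn/N}$ and $\overline{X_k}=\sum_{m=0}^{N-1}\overline{x_m}\,e^{2\pi i km/N}$. Substituting these and interchanging the order of the (finite) summations yields a triple sum over $k$, $n$, and $m$ of the terms $x_n\overline{x_m}\,e^{2\pi i k(m-n)/N}$. The crucial step is to carry out the inner summation over $k$ first and invoke the orthogonality relation $\sum_{k=0}^{N-1}e^{2\pi i k(m-n)/N}=N\,\delta_{nm}$, where $\delta_{nm}$ is the Kronecker delta with indices read modulo $N$.

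I expect this orthogonality relation to be the main (and essentially the only) obstacle, so I would establish it as a short geometric-series computation. When $m=n$ every term equals $1$ and the sum is $N$; when $m\neq n$, the ratio $r=e^{2\pi i(m-n)/N}$ is a nontrivial $N$-th root of unity with $r\neq 1$, so the geometric sum evaluates to $\frac{1-e^{2\pi i(m-n)}}{1-e^{2\pi i(m-n)/N}}=0$ because $e^{2\pi i(m-n)}=1$. Once this identity is in hand, everything else is routine bookkeeping.

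Finally, substituting the collapsed inner sum back into the triple sum removes the $m$-summation through the delta, leaving $\sum_{k=0}^{N-1}|X_k|^2=N\sum_{n=0}^{N-1}x_n\overline{x_n}=N\sum_{n=0}^{N-1}|x_n|^2$. Dividing both sides by $N$ then gives the claimed identity $\sum_{n=0}^{N-1}|x_n|^2=\frac{1}{N}\sum_{k=0}^{N-1}|X_k|^2$, which completes the proof.
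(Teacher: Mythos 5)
Your proof is correct and is the canonical derivation of the discrete Parseval identity: expand $\sum_k X_k\overline{X_k}$ via the DFT definition, swap the finite sums, and collapse the inner sum with the orthogonality relation $\sum_{k=0}^{N-1}e^{2\pi i k(m-n)/N}=N\delta_{nm}$, which you justify correctly by the geometric-series computation (for $0\le m,n\le N-1$ with $m\neq n$ the ratio is a nontrivial $N$-th root of unity, so the sum vanishes). The paper itself offers no proof of this lemma --- it simply invokes Parseval's Theorem as a known result --- so there is no alternative argument to compare against; your write-up supplies exactly the standard derivation that the citation implicitly relies on. The only point worth noting is that the paper applies the lemma to two-dimensional feature maps of size $W\times H$ (with $N=WH$), so strictly one needs the 2D version of the identity; this follows immediately from your 1D argument by separability of the 2D DFT (apply the orthogonality relation once in each index), but it would be worth one sentence to say so.
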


\begin{lemma}
	\label{lemma2}
	Any complex matrices X and Y satisfy the following inequality
	\begin{equation}
	\begin{aligned}
	\|X\pm Y\|^2&\leq\|X\|^2+2\|X\|\|Y\|+\|Y\|^2,
	\end{aligned}
	\end{equation}
	where $\|\cdot\|$ denotes any matrix norm.
\end{lemma}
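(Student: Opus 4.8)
The plan is to recognize that the right-hand side is simply the expansion of $(\|X\|+\|Y\|)^2$, so the claim reduces to the elementary bound $\|X\pm Y\|\leq\|X\|+\|Y\|$ followed by squaring. Since the statement asserts the inequality for \emph{any} matrix norm, I would rely only on the two defining axioms of a norm that are relevant here: subadditivity (the triangle inequality) and absolute homogeneity. In particular, submultiplicativity is not needed, so the result holds for every norm on the space of matrices, not just operator or Frobenius norms.

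First I would rewrite the target right-hand side as a perfect square, noting $\|X\|^2+2\|X\|\|Y\|+\|Y\|^2=(\|X\|+\|Y\|)^2$. Next I would invoke the triangle inequality to obtain $\|X\pm Y\|\leq\|X\|+\|{\pm Y}\|$. Then, using absolute homogeneity, $\|{-Y}\|=|{-1}|\,\|Y\|=\|Y\|$, so both the $+$ and the $-$ cases collapse to the single bound $\|X\pm Y\|\leq\|X\|+\|Y\|$. Finally, because a norm is nonnegative, both sides of this inequality are nonnegative, and squaring a valid inequality between nonnegative reals preserves its direction; squaring therefore yields
\begin{equation}
\|X\pm Y\|^2\leq(\|X\|+\|Y\|)^2=\|X\|^2+2\|X\|\|Y\|+\|Y\|^2,
\nonumber
\end{equation}
which is exactly the asserted inequality.

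There is no genuine obstacle in this argument: the whole content is the triangle inequality plus the trivial observation that $\|{-Y}\|=\|Y\|$. The only point worth stating explicitly, to keep the proof self-contained, is the justification for squaring, namely that $0\leq a\leq b$ implies $a^2\leq b^2$; I would record this one-line fact so that the step from the linear bound to the quadratic bound is rigorous rather than merely asserted.
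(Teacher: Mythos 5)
Your proof is correct. Note that the paper itself states Lemma 2 in the appendix without any proof, so there is no paper argument to compare against; your write-up supplies exactly the missing justification, and it is the canonical one: rewrite the right-hand side as $(\|X\|+\|Y\|)^2$, apply the triangle inequality together with absolute homogeneity ($\|{-Y}\|=\|Y\|$) to get $\|X\pm Y\|\leq\|X\|+\|Y\|$, and square, using that $0\leq a\leq b$ implies $a^2\leq b^2$. Your explicit remark that only subadditivity and homogeneity are needed — not submultiplicativity — is a worthwhile precision, since it confirms the lemma holds for every norm on the space of matrices (the paper only ever applies it with the Frobenius norm, in the proof of Theorem 2), and it makes the phrase ``any matrix norm'' in the statement rigorous rather than an unexamined claim.
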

\end{document}